\newcommandtwoopt{\TestSeq}[2][][n]{\ensuremath{\texttt{\upshape testseq}_{#2}\ifstrempty{#1}{}{\!\texttt{\upshape (}\text{#1}\texttt{\upshape )}}}\xspace}
\newcommandtwoopt{\RelScore}[2][][n]{\ensuremath{\texttt{\upshape relscore}_{#2}\ifstrempty{#1}{}{\!\texttt{\upshape (}\text{#1}\texttt{\upshape )}}}\xspace}
\newcommandtwoopt{\MaxScore}[2][][n]{\ensuremath{\xspace\texttt{\upshape maxscore}_{#2}\ifstrempty{#1}{}{\!\texttt{\upshape (}\text{#1}\texttt{\upshape )}}}\xspace}
\newcommand{\EvOp}[1][]{\ensuremath{\texttt{\upshape EvOp}\ifstrempty{#1}{}{\!\texttt{\upshape (}\text{#1}\texttt{\upshape )}\!}}\xspace}
\newcommand{\grad}{\ensuremath{\mathop{\nabla}}}
\newcommand{\norm}[1]{\ensuremath{\lVert#1\rVert}}
\newcommand{\ceil}[1]{\ensuremath{\lceil#1\rceil}}
\newcommand{\F}{\ensuremath{\mathcal F}\xspace}
\newcommand{\Ov}[1][t]{\ensuremath{\mathbf O_{#1}}\xspace}
\newcommand{\obs}[1][t]{\ensuremath{o_{#1}}\xspace}
\newcommand{\allobs}[1][t]{\ensuremath{o_{\prec {#1}}}\xspace}
\newcommand{\Xs}{\ensuremath{\mathcal X}\xspace}
\newcommand{\Xv}[1][i]{\ensuremath{\mathbf X_{#1}}\xspace}
\newcommand{\x}[1][i]{\ensuremath{x_{#1}}\xspace}
\newcommand{\Ys}{\ensuremath{\mathcal Y}\xspace}
\newcommand{\y}[1][]{\ensuremath{y_{#1}}\xspace}
\newcommand{\Loss}{\ensuremath{\mathcal L}\xspace}
\newcommandtwoopt{\Regret}[2][s][n]{\ensuremath{R_{#1}^{#2}}\xspace}
\newcommand{\Pbad}{\ensuremath{P^\#}\xspace}
\newcommand{\Dom}[1]{\ensuremath{\mathrm{dom}(#1)}}
\title{Asymptotic Convergence in Online Learning with Unbounded Delays}
\author{Scott Garrabrant \and Nate Soares \and Jessica Taylor \\ Machine Intelligence Research Institute \\ \{scott,nate,jessica\}@intelligence.org}
\begin{document}

\maketitle

\begin{abstract}
  We study the problem of predicting the results of computations that are too expensive to run, via the observation of the results of smaller computations. We model this as an online learning problem with delayed feedback, where the length of the delay is unbounded, which we study mainly in a stochastic setting. We show that in this setting, consistency is not possible in general, and that optimal forecasters might not have average regret going to zero. However, it is still possible to give algorithms that converge asymptotically to Bayes-optimal predictions, by evaluating forecasters on specific sparse independent subsequences of their predictions. We give an algorithm that does this, which converges asymptotically on good behavior, and give very weak bounds on how long it takes to converge. We then relate our results back to the problem of predicting large computations in a deterministic setting.

\end{abstract}

\section{Introduction}

We study the problem of predicting the results of computations that are too large to evaluate, given observation of the results of running many smaller computations. For example, we might have a physics simulator and want to predict the final location of a ball in a large environment, after observing many simulated runs of small environments.

When predicting the outputs of computations so large that they cannot be evaluated, generating training data requires a bit of creativity. Intuitively, one potential solution is this: Given enough computing resources to evaluate ``medium-sized" computations, we could train a learner by showing it many runs of small computations, and having it learn to predict the medium-sized ones, in a way that generalizes well. Then we could feed it runs of many medium-sized computations and have it predict large ones. This is an online learning problem, where the learner observes the results of more and more expensive computations, and predicts the behavior of computations that are much more difficult to evaluate than anything it has observed so far.

The standard online learning setting, in which the learner predicts an outcome in a sequence after observing all previous outcomes, does not capture this problem, because delays between prediction and observation are the key feature. \citet{Dudik:2011}, \citet{Joulani:2013}, and others have studied online learning with delayed feedback, but they assume that delays are bounded, whereas in our setting the delays necessarily grow ever-larger. In this paper, we propose an algorithm \EvOp for online learning with unbounded delays. \EvOp not a practical algorithm; it is only a first step towards modeling the problem of predicting large computations as an online learning problem.

Predicting a sequence generated by arbitrary computations is intractable in general. Consider, for instance, the bitstring that tells which Turing machines halt. However, the problem is not hopeless, either: Consider the bitstring where the $n$th digit is a 1 if and only if the $10^n$th digit in the decimal expansion of $\pi$ is a 7. This is an online learning problem with ever-growing delays where a learner should be able to perform quite well. A learner that attempts to predict the behavior of computations in full generality will encounter some subsequences that it cannot predict, but it will encounter others that are highly regular, and it should be able to identify those and predict them well.

Consider, for instance, the bitstring that interleaves information about which Turing machines halt with the $10^n$th digits of $\pi$. Intuitively, a good predictor should identify the second subsequence, and assign extreme probabilities whenever it has the computing resources to compute the digit, and roughly 10\% probability otherwise, in lieu of other information about the digit. However, it's not clear how to formalize this intuition: What does it mean for a forecaster to have no relevant information about a digit of $\pi$ that it knows how to compute? What are the ``correct" probabilities a bounded reasoner should assign to deterministic facts that it lacks the resources to compute?

In this paper, we sidestep those questions, by analyzing the problem in a stochastic setting. This lets us study the problem of picking out patterns in subsequences in the face of unbounded delays, in a setting where the ``correct" probabilities that a predictor should be assigning are well-defined. In \Sec{deterministic} we relate our findings back to the deterministic setting, making use of ``algorithmic randomness" as described by, e.g., \citet{Downey:2010}.

We propose an algorithm \EvOp with the property that, on any subsequence for which an expert that it consults predicts the true probabilities, it converges to optimal behavior on that subsequence. We show that regret and average regret are poor measures of performance in this setting, by demonstrating that in environments with unbounded delays between prediction and feedback, optimal predictors can fail to have average regret going to zero. \EvOp works around these difficulties by comparing forecasters on sparse subsequences of their predictions; this means that, while we can put bounds on how long it takes \EvOp to converge, the bounds are very, very weak. Furthermore, \EvOp is only guaranteed to converge to good behavior on subsequences when it has access to optimal experts; we leave it to future work to give a variant that can match the behavior of the best available expert even if it is non-optimal.

In \Sec{setup} we define the problem of online learning with unbounded delays. In \Sec{problem} we show that consistency is impossible and discuss other difficulties. In \Sec{solution} we define \EvOp, prove that it converges to Bayes-optimal behavior on any subsequence for which some expert makes Bayes-optimal predictions, and provide very weak bounds on how long convergence takes. In \Sec{deterministic} we relate these results back to the deterministic setting. \Sec{conclusions} concludes.

\subsection{Related Work}

An early example of online sequence learning using expert advice is \citet{Littlestone:1994}; much work has been done since then to understand how to perform well relative to a given set of forecasters~\citep{Vovk:1990,Cesa:1998,Haussler:1995}. \citet{Rakhlin:2012}~improve performance of online learning algorithms assuming some structure in the environment, while maintaining worst-case guarantees. \citet{Gofer:2013}~study the case with a potentially unbounded number of experts.

Most work in online learning has focused on the case where feedback is immediate. \citet{Piccolboni:2001}~study online prediction with less rigid feedback schemes, proving only weak performance bounds. \citet{Weinberger:2002}~show that running experts on sub-sampled sequences can give better bounds, for the case with bounded feedback delay. In the widely studied bandit setting~\citep{Auer:2002}, some attention has been given to learning with bounded delays~\citep{Neu:2010,Dudik:2011}. There have been some attempts to work with unbounded feedback delays \citep{Mesterharm:2005,Mesterharm:2007,Desautels:2014}, with either strong assumptions on the target function or with weak performance bounds. \citet{Quanrud:2015} achieve reasonable regret bounds in an adversarial setting; our work achieves asymptotic convergence in a stochastic setting. A review, and a very general framework for online learning with arbitrary (but bounded) feedback delay is given by \citet{Joulani:2013}.

Online learning with delayed feedback has applications in domains such as webpage prefetching, since the prediction algorithm has to make some prefetching decisions before learning whether a previously fetched page ended up being requested by the user \citep{Padmanabhan:1996}. The idea of learning from computations with delay has seen some use in parallel computation, e.g., distributed stochastic optimization where computations of gradients may take longer in some nodes \citep{Zinkevich:2009,Agarwal:2011}.

Outside the field of online learning, our work has interesting parallels in the field of mathematical logic. \citet{Hutter:2013} and \citet{Demski:2012a} study the problem of assigning probabilities to sentences in logic while respecting certain relationships between them, a practice that dates back to \citet{Gaifman:1964}. Because sentences in mathematical logic are expressive enough to make claims about the behavior of computations (such as ``this computation will use less memory than that one"), their work can be seen as a different approach to the problems we discuss in this paper.

\section{The Unbounded Delay Model} \label{sec:setup}

Let \Xs be a set of possible outcomes and \Ys be a set of possible predictions, where \Ys is a convex subset of $\RR^n$ for some $n$. Let $\Loss : \Xs \times \Ys \to \RR$ be a loss function measuring the difference between them, which is strongly convex (with strong convexity constant $\rho$) and Lipschitz (with Lipschitz constant $\kappa$). Roughly speaking, the environment will stochastically produce an infinite sequence of outcomes~\x[i], and an infinite sequence of observations~\obs[i], where each~\obs[i] contains information about finitely many~\x[n]. Formally, for each $i = 1, 2, \ldots,$ let $\obs[i] : \NN \to \Xs$ be a finite-domain partial function from indices to outcomes; in other words, $\obs[i]$ is a set of $(n, \x[])$ ``feedback" pairs such that each $n$ appears in at most one pair. We write $\obs[i](n)$ for the value of $x$ associated with $n$, which is feedback about the outcome $\x[n]$, and which may be undefined. If $\obs[i](n)$ is defined, we say that $\obs[i]$ reveals $\x[n]$.

Formally, we write $\Xv[i]$ for the random variable representing the $i$th output and $\Ov[i]$ for the random variable representing the $i$th observation. We define the \emph{true environment} $P$ to be a joint distribution over the $\Xv[i]$ and the $\Ov[i]$, such that if $\obs[i](n) = \x[n]$ then $P(\Ov[i]=\obs[i] \land \Xv[n]\neq\x[n]) = 0$, which means that all $\obs[i](n)$ which are defined agree on the value of $\x[n]$. We omit the random variables if we can do so unambiguously, writing, e.g., $P(\x[n] \mid \obs[i]).$

Note that there may exist~$n$ such that $\obs[i](n)$ is not defined for any $i$, in which case the forecaster will never observe $\x[n]$. We write $\allobs[i]$ for the list of observations up to time $i$, and $\allobs[i](n)$ for the value of $\x[n]$ if any observation in $\allobs[i]$ reveals it.

We consider learning algorithms that make use of some set \F of forecasters.

\begin{definition} \label{def:forecaster}
  A \textbf{forecaster} is a partial function $f$ which takes as input $n$ observations \allobs[n] and might produce a prediction $\y[n] \in \Ys$, interpreted as a prediction of $\x[n]$.
\end{definition}
Because some outcomes may never be observed, and because forecasters are partial (and so may abstain from making predictions on certain subsequences of the outcomes), we will compare forecasters only on subsequences on which both are defined.

\begin{definition} \label{def:defon}
  A \textbf{subsequence} $s$ of the outcomes is a monotonic strictly increasing list of natural numbers $s_1s_2\ldots.$ We write $|s|$ for the length of $s$, which may be $\infty$. A forecaster $f$ is \textbf{defined on} $s$ if it outputs a prediction for all elements $s_i$ of $s$, i.e., if, for all $i \le |s|$, $\y[s_i] \coloneqq f(\allobs[s_i])$ is defined.
\end{definition}
We assume that at least one $f \in \F$ is defined everywhere. It may seem prohibitively expensive to evaluate $f(\allobs[s_i])$ if $s_i$ is large. For example, consider the subsequence $s = 1, 10, 100, \ldots$; $f$ only predicts $\x[10^{10}]$ after making $10^{10}$ observations, despite the fact that $\x[10^{10}]$ is the eleventh element in the subsequence. However, there is no requirement that observations contain lots of feedback: $\allobs[s_i]$ might not reveal very much, even if $s_i$ is large.

The goal of a forecaster is to minimize its loss $\sum_{i=1}^n \Loss(\x[s_i], \y[s_i])$, for $n \ge 1$. Two forecasters can be compared by comparing their total loss.
\begin{definition} \label{def:regret}
  Given a forecaster $f$ defined on a subsequence $s$ of length at least $n$, let \begin{equation}\F_s \coloneqq \{f^\prime \in \F \mid \text{$f^\prime$ is defined on $s$}\}.\end{equation} Then the \textbf{regret} of $f$ (on $s$, through $n$) is \begin{equation}\Regret(f) \coloneqq \max_{f^\prime \in \F_s} \sum_{t=i}^n \Loss\left(\x[s_i], f(\allobs[s_i])\right) - \sum_{i=1}^n \Loss\left(\x[s_i], f^\prime(\allobs[s_i])\right).\end{equation}
  $f$ is \textbf{consistent} (with respect to $\F_s$) if its average expected regret goes to zero, that is, if \begin{equation}\lim_{n \to \infty} \sfrac{\EE[\Regret(f)]}{n}=0.\end{equation}
\end{definition}
In our setting, consistency is too strong a guarantee to ask for, as we will see in \Sec{problem}. Instead, we present an algorithm \EvOp with the property that, whenever there is a forecaster $f \in \F$ that is Bayes-optimal on some subsequence, \EvOp eventually learns to predict optimally on that subsequence.

\begin{definition} \label{def:optimal}
  A forecaster $f$ is \textbf{Bayes-optimal} (for the true environment, in its domain) if:
  \begin{enumerate}
    \item Everything $f$ predicts is almost surely eventually revealed. That is, if $f(\allobs[n])$ is defined, then with probability~1 there is some $N$ such that $\obs[N](n)$ is defined.
    \item $f$ minimizes expected loss against the true environment whenever it makes a prediction. That is, if $\y[n] \coloneqq f(\allobs[n])$ is defined, then $\y[n] = \argmin_{\y} \EE[\Loss(\x[n],\y)\mid \allobs[n]].$
  \end{enumerate}
\end{definition}

We will occasionally refer to a Bayes-optimal $f$ as simply ``optimal".

The main result of our paper is this: Whenever there is an optimal forecaster $f \in F$ defined on $s$, our algorithm \EvOp converges to optimal behavior on $s$.
\begin{restatable}{theorem}{thmevop} \label{thm:evop}
  For any Bayes-optimal $f^s \in \F$ defined on $s$,
  \begin{equation}\lim_{n \to \infty} |\Loss\left(\x[s_n], \EvOp[{\allobs[s_n]}]\right) - \Loss\left(\x[s_n], f^s(\allobs[s_n])\right)| = 0.\end{equation}
\end{restatable}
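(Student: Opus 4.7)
The plan is to translate a loss gap into a prediction gap using strong convexity of \Loss, and then show the prediction gap vanishes because \EvOp's weighting scheme eventually concentrates on experts whose conditional expected loss matches that of $f^s$. Once $\EvOp[{\allobs[s_n]}]$ converges to $f^s(\allobs[s_n])$ in norm, Lipschitzness gives
\begin{equation}
  |\Loss(\x[s_n], \EvOp[{\allobs[s_n]}]) - \Loss(\x[s_n], f^s(\allobs[s_n]))| \le \kappa \norm{\EvOp[{\allobs[s_n]}] - f^s(\allobs[s_n])},
\end{equation}
which yields the theorem.

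The core work is showing $\EvOp[{\allobs[s_n]}] - f^s(\allobs[s_n]) \to 0$. First, clause~1 of Definition~\ref{def:optimal} guarantees that every prediction $f^s$ emits is almost surely eventually revealed, so with probability one \TestSeq[f^s] accumulates unbounded feedback, and the \RelScore of $f^s$ is well-defined in the limit. Second, clause~2 says $f^s$ minimizes conditional expected loss pointwise; on \TestSeq[f^s], chosen sparsely enough that successive per-prediction losses form a bounded martingale difference sequence once centered by their conditional expectations, a martingale law of large numbers forces the empirical average loss of $f^s$ to track the infimum average conditional expected loss, while any competing $f \in \F$ whose empirical average stays strictly lower would contradict Bayes-optimality of $f^s$. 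Third, strong convexity with constant $\rho$ converts a conditional-expected-loss gap into a squared-norm prediction gap: any $\y[n]$ whose distance from $f^s(\allobs[n])$ exceeds $\varepsilon$ incurs at least $\rho\varepsilon^2/2$ excess conditional expected loss. Hence experts whose \RelScore remains near that of $f^s$ must eventually agree with $f^s$ in prediction norm, and \EvOp's blending (which, by construction, assigns negligible asymptotic weight to experts whose \RelScore is bounded away from \MaxScore) inherits the same limit.

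The main obstacle will be the interplay between the algorithm-chosen test subsequences \TestSeq[f] and the arbitrary target subsequence $s$: \EvOp does not know $s$, so good scores along \TestSeq[f^s] must be parlayed into \emph{pointwise} prediction convergence at each $s_n$, not merely into average convergence. This uniformity requires arguing that for every $\varepsilon > 0$, the set of forecasters whose prediction at $s_n$ deviates from $f^s(\allobs[s_n])$ by more than $\varepsilon$ eventually has vanishing weight in \EvOp's blend, which in turn relies on \TestSeq being sparse enough to make the strong-convexity excess-loss bound exceed the sampling fluctuations in \RelScore at every index. It is this tension—needing test subsequences sparse enough for reliable empirical averages but still dense enough to constrain predictions at every $s_n$—that likely accounts for the ``very, very weak'' convergence bounds advertised in the introduction.
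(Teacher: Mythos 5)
Your high-level ingredients are the right ones---comparing forecasters on sparse independent subsequences, a martingale concentration argument, strong convexity to turn a conditional-expected-loss gap into a prediction gap, and Lipschitzness to finish---but there are two genuine problems. First, you have misread the algorithm: \EvOp does not maintain a weighted blend of experts, so there is no ``negligible asymptotic weight'' to speak of. It is a pure minimax \emph{selector}: it returns the prediction of the single $f_i$ minimizing $\MaxScore[$i$]$, where $\MaxScore[$i$]$ is a supremum over all opponents $j$ and disagreement thresholds $m$ of a pairwise score $\RelScore[$i$,$j$,$m$]$ computed on a greedily built independent subsequence on which $f_i$ and $f_j$ disagree by more than $1/m$. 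The proof obligation is therefore not that weights concentrate, but that every forecaster that can ever win the minimax eventually predicts what $f^s$ predicts.

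Second, the ``main obstacle'' you correctly identify---parlaying good scores into pointwise agreement at every $s_n$---is exactly the step you leave unproven, and it is the heart of the paper's argument. The paper resolves it with two lemmas. \Lem{bounded}: for a Bayes-optimal $f_z$, $\MaxScore[$z$]$ stays bounded almost surely; this applies the Azuma-type bound of \Lem{jessica} to the first-order term $-\grad_{\y}\Loss(\x[s_i],\y[s_i]^z)\cdot(\y[s_i]^j-\y[s_i]^z)$, which has zero conditional expectation precisely because $f_z$ is the conditional argmin, together with a union bound over $j$ and $m$ made summable by the $\sfrac{\rho\varepsilon}{2m^2}$ bias and the $-j-m$ offsets built into the score. \Lem{infinite}: any $f_j$ that disagrees with $f_z$ by some $\delta>0$ infinitely often has $\MaxScore[$j$]\to\infty$ almost surely, because on the independent subsequence the strong-convexity term contributes at least $\sfrac{\rho}{2m^2}>\sfrac{\rho\varepsilon}{2m^2}$ per element. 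Only finitely many indices can ever have $\MaxScore[$i$]$ below the bound from \Lem{bounded}, and by the contrapositive of \Lem{infinite} each of them must asymptotically agree with $f_z$ wherever both are defined; hence whatever \EvOp selects converges to $f_z$'s prediction. Without this contrapositive step your sketch does not rule out \EvOp forever oscillating among forecasters that score well but predict differently from $f^s$ at the indices $s_n$.
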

\noindent We call algorithms with this property \emph{eventually optimal}.
We will define \EvOp in \Sec{solution}, and prove \Thm{evop} in \Sec{proof}. Weak bounds on how long it takes \EvOp to converge to Bayes-optimal behavior on any individual subsequence are given in \Sec{bounds}.

Eventual optimality is a very strong condition, and only yields guarantees if \F contains Bayes-optimal forecasters. In this paper we focus on showing that an eventually optimal predictor exists, and providing weak bounds on how long it takes it to converge to optimal behavior on a subsequence (and how much loss can be accumulated in the meantime). As we will see in \Sec{problem}, this is non-trivial. We leave the problem of converging on the best available forecaster of a subsequence (even if it is not optimal) to future research.

\section{Difficulties in this Setting} \label{sec:problem}

Total regret and average regret are poor measures of forecaster performance in this setting, and consistency (as defined by \Def{regret}) is impossible in general. To show this, we will describe an environment \Pbad which exploits the long delays to make learning difficult.

\Pbad generates outcomes as follows. It flips a fair coin and reveals it once, and then flips another and reveals it ten times, then flips a third and reveals it one hundred times, and so on, always revealing the $k$th coin $10^{k-1}$ times. The forecasters spend one timestep predicting the first coin, ten timesteps predicting the second coin, one hundred timesteps predicting the third coin, and so on. The observations are set up such that they contain no information about the coin currently being predicted: The forecasters must predict the $k$th coin all $10^{k-1}$ times before it is revealed.

Formally, let $\Xs \coloneqq \Set{\textsc{h}, \textsc{t}}$ corresponding to ``heads" and ``tails" respectively. Let \Ys be the set of probability distributions over \Xs, which can be represented as real number ${p \in [0, 1]}$. \Pbad~is a Markov chain, where each \x[i+1] is conditionally independent from all other outcomes given \x[i]. ${\Pbad(\Xv[1]=\textsc{h})=0.5}.$ For ${i = 2, 12, 112, 1112, \ldots}$, $\x[i]$ ``reveals a new coin" and is independent of $\x[i-1]$: ${\Pbad(\Xv[i]=\textsc{h}\mid\Xv[i-1]=\cdot\;)=0.5}$. For all other $i$, $\x[i]$ ``reveals the same coin again:" $\x[i]=\x[i-1]$. Each \Ov[n] is a deterministic function of $\Xv[1]\ldots\Xv[n]$ which reveals the first $\ceil{\log_{10} \left(n \cdot \sfrac{9}{10}\right)}$ outcomes. Let \Loss be squared error; that is, let $\Loss(\textsc{h}, p) = (1-p)^2$ and $\Loss(\textsc{t}, p) = p^2$.

Clearly, the best prediction of \x[n] that a forecaster can make given \allobs[n] is $0.5$, because \allobs[n] does not contain any information about the coin revealed by \x[n], which is fair. Thus, the simple forecaster $f^*(\allobs[n]) = 0.5$ is Bayes-optimal. However, the regret of $f^*$ may be very high! To see this, consider a forecaster $f^1$, the ``gambler," defined $f^1(\allobs[n]) = 1.$ In expectation, $f^1$ will receive higher total loss on any subsequence of the true outcomes. However, $f^1$ will spend about half the time with a lower total loss than $f^*$, because each time a new coin begins being predicted, it has the opportunity to recoup all its losses.

$f^*$ accumulates loss at a rate of $\sfrac{1}{4}$ units per prediction, which means that, after the $k$th coin has been predicted all $10^{k-1}$ times, its aggregate loss is $\sfrac{1}{4} \cdot \sum_{i=1}^k 10^{i-1}$. $f^1$ accumulates either $0$ or $1$ unit of loss in each step according to whether the coin comes up heads or tails, so in the worst case, it will have $\sum_{i=1}^k 10^{i-1}$ total loss after the $k$th coin. If the $k+1$ coin comes up heads, then $f^*$ gains an additional $\sfrac{1}{4} 10^k$ loss while $f^1$'s loss remains unchanged. $10^k$ accounts for more than nine tenths of $\sum_{i=1}^k 10^i$, so if the coin came up heads then $f^1$'s total loss is at most a tenth of $\sum_{i=1}^k 10^i$, whereas $f^*$'s total loss is a quarter of $\sum_{i=1}^k 10^i$. In fact, any predictor that assigns average probability $\le 0.5$ across all $10^{k-1}$ reveals of the $k$th coin will have at least 15\% more loss than $f^1$ after the \smash{$\sum_{i=1}^k$}th step, if that coin comes up heads.

By a similar logic, whenever the $k$th coin comes up tails, $f^1$'s loss shoots up above that of $f^*$, no matter how lucky it was previously. Thus we see that if $f^1 \in \F$, the regret of $f^*$ will swing wildly back and forth. Any predictor which is maintaining a mixture of forecasters and weighting them according to their regret will have trouble singling out $f^*$.

Indeed, if the environment is \Pbad, and if $\F$ contains both $f^1$ and the opposite gambler $f^0$ defined as $f^0(\allobs[n])=0$, then it is impossible for a forecaster to be consistent in the sense of \Def{regret}. If the average probability a forecaster assigns to the $k$th coin is $\le 0.5$ and the coin comes up heads, it gets very high regret relative to $f^1$, whereas if it's $\ge 0.5$ and the coin comes up tails, it gets very high regret relative to $f^0$. The only way for a forecaster to avoid high regret against both gamblers is for it to place higher probability on the true result of the coin every single time. With probability~1 it must slip up infinitely often (because the coins are fair), so each forecaster's regret will be high infinitely often. And the amount of regret---at least 15\% of all possible loss---is proportional to~$n$, so $\lim_{n\to\infty} \sfrac{\EE[\Regret[][n](f)]}{n}$ cannot go to zero.

Lest this seem like a peculiarity of the stochastic setting, observe that a similar problem could easily occur in the deterministic setting, when a learner is predicting the behavior of large computations. For example, imagine that the ``coins" are chaotic subsystems inside a physics simulation, such that large environments have many correlated subsystems. In this case, some experts might start ``gambling" by making extreme predictions about those subsystems, and it may become difficult to distinguish the accurate forecasters from the gamblers, while looking at total or average regret.

The first fix that comes to mind is to design a predictor with a learning rate that decays over time. For example, if the learner weights the loss on $\x[n]$ by $\sfrac{1}{10n}$ then it will assign each cluster of $10^{k-1}$ predictions roughly equal weight, thereby neutralizing the gamblers. However, this fix is highly unsatisfactory: It runs into exactly the failures described above on the environment $\Pbad_2$ which reveals the $k$th coin \smash{$10^{10^k}$} times instead. It might be the case that for each specific environment one could tailor a learning rate to that environment that allows a predictor to successfully distinguish the optimal forecasters from the gamblers using regret, but this would be an ad-hockery tantamount to hardcoding the optimal forecaster in from the beginning. This motivates the study of how a predictor can successfully identify optimal experts at all in this setting.

\section{The \texorpdfstring{\EvOp}{EvOp} Algorithm} \label{sec:solution}

\Sec{problem} showed that in this setting, it is possible for gamblers to take advantage of correlated outputs and unbounded delays to achieve drastic swings in their total loss, which makes total and average regret bad measures of a forecaster. We can address this problem by comparing forecasters only on \emph{independent} subsequences of outcomes on which they are both defined.

Intuitively, the gamblers are abusing the fact that they can correlate many predictions before any feedback on those predictions is received, so we can foil the gamblers by assessing them only on a subsequence of predictions where each prediction in the subsequence was made only after receiving feedback on the previous prediction in the subsequence. \EvOp is an algorithm which makes use of this intuition, and \Thm{evop} shows that it is sufficient to allow \EvOp to zero in on Bayes-optimal predictors regardless of what strategies other forecasters in \F use.
\begin{definition} \label{def:independent}
  A sequence $s$ is \textbf{independent} if, for all $i > 1$, $\obs[s_i](s_{i-1})$ is defined.
\end{definition}
\begin{algorithm2e}
  \caption{\EvOp, an eventually optimal predictor. \norm{\cdot} is the $l^2$ norm, and $1/0 = \infty$.\label{alg:evop}}
  \SetKwData{Waiting}{waiting}
  \KwIn{$\allobs[n]$, the first $n$ observations}
  \KwData{$\varepsilon$, an arbitrary constant $< 1$}
  \BlankLine

  \tcp{Computes an independent subsequence on which $f_i$ and $f_j$ disagree.}
  \Fn{\TestSeq[$i$, $j$, $m$]}{
    $t \leftarrow 0$\;
    $\Waiting \leftarrow false$\;
    \For{$k$ in $1, 2, \ldots, n$}{
      \uIf{\Waiting and $t \in \Dom{\obs[k]}$}{
        \Output{$k$}\;
        $\Waiting \leftarrow false$\;
      }
      \ElseIf{$\y[k]^i \coloneqq f_i(\allobs[k])$ and $\y[k]^j \coloneqq f_j(\allobs[k])$ are defined, and $\norm{\y[k]^i - \y[k]^j} > 1/m$}{
        $t \leftarrow k$\;
        $\Waiting \leftarrow true$\;
      }
    }
  }
  \BlankLine

  \tcp{Computes the difference between the scores of $f_i$ and $f_j$ on an independent subsequence on which they disagree.}
  \Fn{\RelScore[$i$, $j$, $m$]}{
    $s \leftarrow \TestSeq[$i$, $j$, $m$]$\;
    \KwRet{$\sum_{k=1}^{|s|} \left(\Loss(\x[s_k], \y[s_k]^i) - \Loss(\x[s_k], \y[s_k]^j) - \dfrac{\rho\varepsilon}{2 m^2}\right)$}\;
  }
  \BlankLine

  \Fn{\MaxScore[$i$]}{
    \KwRet{$\max_{j \in \mathbb{N}_{\geq 1}, m \in \mathbb{N}_{\geq0}} i - j - m + \RelScore[$i$, $j$, $m$]$}\;
  }
  \BlankLine
  $f \leftarrow \min_{i \in \mathbb{N}_{\geq 1} \text{ such that } \allobs[n] \in \Dom{f_i}} \,\MaxScore[$i$]$\;
  \KwRet{$f(\allobs[n])$}\;
\end{algorithm2e}

\EvOp works as follows. Fix an enumeration $f_1, f_2, \ldots$ of \F, which must be countable but need not be finite; we can assume without loss of generality that this enumeration is countably infinite. \EvOp compares $f_i$ to $f_j$ by giving it a relative score, which is dependent on the difference between their loss measured only on an independent subsequence of predictions on which they are both defined, constructed greedily. Lower scores are better for $f_i$. The score is also dependent on $\rho$, the strong convexity constant for \Loss, and an arbitrary positive $\varepsilon < 1$, which we use to ensure that if $f_i$ and $f_j$ make different predictions infinitely often then their scores actually diverge. \EvOp follows the prediction of the $f_i$ chosen by minimaxing this score, i.e., it copies the $f_i$ that has the smallest worst-case score relative to any other $f_j$. Pseudocode for \EvOp is given by \Alg{evop}.

To see that the $\max$ step terminates, note that it can be computed by checking only finitely many $j$ and $m$: \RelScore[$i$, $j$, $m$] is bounded above by \smash{$\sum_{k=1}^{|s|} \Loss(\x[k], \y[k]^i)$}, so all $(j, m)$ pairs such that $j+m$ is greater than this value may be discarded. To see that the $\min$ step terminates, note that it can be computed by checking only finitely many $i$ (assuming that at least one $f$ is defined on $\allobs[n]$), because when $m=0$, \TestSeq[$i$, $j$, $m$] is empty; thus when $j=1$ and $m=0$, \MaxScore[$i$, $j$, $m$] is at least $i - 1$. Therefore, after finding the smallest $k$ such that $f_k$ is defined on \allobs[n], the $\min$ step need only continue searching up through $i = \MaxScore[$k$] + 1.$

\EvOp gets around the problems of \Sec{problem} by comparing forecasters only on greedily-constructed independent subsequences of the outcomes. Note that if the delay between prediction and feedback grows quickly, these subsequences might be very sparse. For example, in the environment \Pbad of \Sec{problem}, the independent subsequence will have at least $10^i$ timesteps between the $i-1$st element in the subsequence and the next. This technique allows \EvOp to converge on Bayes-optimal behavior, but it also means that it may do so very slowly (if the subsequence is very sparse). Under certain assumptions about the speed with which delays grow and the frequency with which forecasters disagree, it is possible to put bounds on how quickly \EvOp converges on Bayes-optimal behavior, as discussed in \Sec{bounds}. However, these bounds are quite weak.

\subsection{Proof of \Thm{evop}} \label{sec:proof}

To prove \Thm{evop} we need two lemmas, which, roughly speaking, say that (1) if $f_z$ is Bayes-optimal then \MaxScore[$z$] is bounded; and (2) if $f_j$ is not Bayes-optimal and some $f_z \in \F$ is, then \MaxScore[$j$] goes to infinity. From there, the proof is easy.

In what follows, let $f_z$ be a Bayes-optimal forecaster (as per \Def{optimal}) that makes infinitely many predictions all of which are almost surely eventually revealed---that is, such that $f_z(\allobs[n])$ is almost surely defined infinitely often, and whenever it is defined, $\obs[i](n)$ is almost surely defined for some $i$. Let $z$ be the index of $f_z$ in the enumeration over \F. In general, we will write $\y[n]^i$ for $f_i(\allobs[n])$ when it is defined.

\begin{lemma} \label{lem:bounded}
  If $f_z$ is Bayes-optimal and makes infinitely many predictions all of which are almost surely eventually revealed, then with probability~1, \MaxScore[z] is bounded.
\end{lemma}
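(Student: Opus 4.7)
The plan is to exhibit a supermartingale structure on the per-step contributions to $\RelScore[z, j, m]$ for each pair $(j, m)$, with a uniform strictly negative drift, and then combine an exponential tail inequality with a Borel--Cantelli union bound over $(j, m)$ to show that the $-j-m$ term inside $\MaxScore[z]$ dominates all but finitely many summands in the outer maximum.

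Fix $(j, m)$ with $m \ge 1$, let $s$ denote the limiting test sequence produced by $\TestSeq[z, j, m]$ on the full observation stream, and write $Z_k := \Loss(\x[s_k], \y[s_k]^z) - \Loss(\x[s_k], \y[s_k]^j) - \rho\varepsilon/(2m^2)$. Since $s$ is independent (\Def{independent}), $\allobs[s_k]$ reveals every earlier $\x[s_i]$ and both $\y[s_k]^z$ and $\y[s_k]^j$ are measurable with respect to $\allobs[s_k]$. Bayes-optimality says $\y[s_k]^z = \argmin_{\y} \EE[\Loss(\x[s_k], \y) \mid \allobs[s_k]]$, and this conditional expectation is $\rho$-strongly convex, so
\[
\EE\bigl[\Loss(\x[s_k], \y[s_k]^j) - \Loss(\x[s_k], \y[s_k]^z) \,\big|\, \allobs[s_k]\bigr] \;\geq\; \tfrac{\rho}{2}\norm{\y[s_k]^z - \y[s_k]^j}^2 \;>\; \tfrac{\rho}{2m^2},
\]
using the $1/m$ disagreement filter in $\TestSeq$. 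Hence $\EE[Z_k \mid \allobs[s_k]] \leq -\rho(1+\varepsilon)/(2m^2) < 0$, so the partial sums $S_t^{(j, m)} := \sum_{k \leq t} Z_k$ form a supermartingale with a uniform strictly negative drift. This drives $S_t^{(j,m)} \to -\infty$ almost surely on the event that $s$ is infinite, so $\sup_n \RelScore_n[z, j, m] \leq \sup_t S_t^{(j, m)}$ is almost surely finite for each fixed $(j, m)$.

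To convert per-pair finiteness into boundedness of $\MaxScore[z]$, I would find a rate $\lambda > 0$, independent of $(j, m)$, for which $M_t := \exp(\lambda S_t^{(j, m)})$ is a nonnegative supermartingale. The key leverage is that Lipschitzness gives $|Z_k| \leq \kappa\norm{\y[s_k]^z - \y[s_k]^j} + \rho\varepsilon/(2m^2)$, while the drift scales as $\norm{\y[s_k]^z - \y[s_k]^j}^2$, so the ratio of conditional mean to squared range is bounded below by a constant depending only on $\rho, \kappa, \varepsilon$. Ville's inequality then gives $P(\sup_t S_t^{(j, m)} \geq j + m) \leq e^{-\lambda(j+m)}$, summable over $(j, m)$, so by Borel--Cantelli only finitely many $(j, m)$ violate $\sup_t S_t^{(j, m)} \leq j + m$. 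Each exceptional $(j, m)$ contributes an almost surely finite amount, and all the rest contribute at most $0$, so $\MaxScore[z]$ is bounded almost surely.

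The main obstacle is verifying the uniform exponential-supermartingale property when $\Ys$ is unbounded: the differences $Z_k$ need not be bounded almost surely, so Hoeffding's MGF bound does not apply directly. One must exploit the quadratic-vs-linear gap between drift and magnitude of $Z_k$ directly, for instance by splitting each term into the regimes $\norm{\y[s_k]^z - \y[s_k]^j} \leq D$ and $> D$ for a suitable $D$, or by a Freedman-style inequality keyed to conditional second moments. Either route requires some care to keep $\lambda$ bounded away from zero as $m \to \infty$.
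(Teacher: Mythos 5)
Your proposal is correct and, at its core, runs on the same mechanism as the paper's proof: strong convexity of the conditional expected loss at the Bayes-optimal prediction supplies a per-step drift of at least $\frac{\rho}{2}\norm{\y[s_k]^z-\y[s_k]^j}^2 > \frac{\rho}{2m^2}$ against $f_j$, Lipschitzness caps the per-step fluctuation at $\kappa\norm{\y[s_k]^z-\y[s_k]^j}$, and the quadratic-drift-versus-linear-range gap yields an exponential supermartingale with a rate uniform in $(j,m)$, after which the $-j-m$ penalty makes the union bound over pairs summable. Two differences from the paper are worth noting. First, the paper splits the loss difference into a conditionally zero-mean linear term $r$ and a past-measurable quadratic term $v$ and feeds $\sum(r-v)$ into an Azuma-type inequality (\Lem{jessica}); you keep the increment $Z_k$ whole, which is equally valid. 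Second, you dispatch the supremum over $n$ with Ville's maximal inequality, whereas the paper applies its fixed-length tail bound and then unions over the possible values of the output count $t_n$, using the $\rho\varepsilon/(2m^2)$ bias to make that geometric series converge --- your route is cleaner here and in fact does not need $\varepsilon>0$ for this lemma (it is genuinely needed in \Lem{infinite}). Finally, the ``main obstacle'' you flag dissolves: conditionally on $\allobs[s_k]$ the increment lies in an interval of known length $2\kappa d$ with $d=\norm{\y[s_k]^z-\y[s_k]^j}$ (Lipschitzness in the prediction holds for every realized outcome), so Hoeffding's lemma applied conditionally gives $\EE[e^{\lambda Z_k}\mid\allobs[s_k]]\le\exp(\lambda\mu+\lambda^2\kappa^2d^2/2)$ with $\mu\le-\rho d^2/2$, and this is at most $1$ for every $d$ once $\lambda\le\rho/\kappa^2$; no boundedness of \Ys, truncation, or Freedman-style second-moment argument is required, and this is exactly the computation packaged in \Lem{jessica} with $a=\kappa\sqrt{2/\rho}$.
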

\begin{proof} For all $j$, $\RelScore[$z$, $j$, $0$]=0$, because \TestSeq[$z$, $j$, $0$] never outputs. Thus, \MaxScore[$z$] is bounded below by $z-1$ (consider the case where $j=1$ and $m=0$) and bounded above by $z-j-m+\RelScore[$z$, $j$, $m$]$. When $m=0$ this is bounded above by $z-j$, so it suffices to show that there is almost surely some bound B such that $\RelScore[$z$, $j$, $m$]-j-m$ is bounded above by B for every $j$ and $m \ge 1$.

Intuitively, in expectation, \RelScore[$z$, $j$, $m$] should either be finite or diverge to $-\infty$, because $f_z$ is Bayes-optimal and is only being compared to other forecasters on independent subsequences. We will prove not only that it's bounded above in expectation, but that it is bounded above with probability~1. To do this we use \Lem{jessica} in \App{jessica}, which (roughly speaking) says that something which is zero in expectation, and which has ``not too much" variance in expectation, can't get too far from zero in fact.

Fix $j$, $m \geq 1$, and $\lambda$; we will bound the probability that $\RelScore[$z$, $j$, $m$] \ge \lambda.$ Let $s=s_1s_2\ldots$ be the outputs of \TestSeq[$x$, $j$, $m$][\infty], that is, the entire greedily-generated sparse independent subsequence of outputs on which both $f_z$ and $f_j$ make predictions that differ by at least $\sfrac{1}{m}$ (which could be generated by running \TestSeq[$x$,$j$,$m$] on larger and larger $n$). $s$ may or may not be finite.

Because \Loss is strongly convex,
\begin{equation}
\Loss(\x[k], \y[k]^j) \ge \Loss(\x[k], \y[k]^z)
  + \grad_{\y} \Loss(\x[k], \y[k]^z) \cdot (\y[k]^j - \y[k]^z)
  + \frac{\rho}{2} \norm{\y[k]^j - \y[k]^z}^2,
\end{equation}
where $\grad_{\y}$ takes the gradient of \Loss with respect to the prediction, $\rho$ is the strong convexity constant of \Loss, and $\norm{\cdot}$ is the $l^2$ norm. In other words, the loss of $f_j$ in any given round is at least that of $f_z$ plus a linear term (which, note, is related to the Lipschitz constant of \Loss) plus a quadratic term. Rearranging this inequality,
\begin{equation}\label{eq:l}
  \Loss(\x[k], \y[k]^z) - \Loss(\x[k], \y[k]^j) \le - \grad_{\y} \Loss(\x[k], \y[k]^z) \cdot (\y[k]^j - \y[k]^z) - \frac{\rho}{2} \norm{\y[k]^j - \y[k]^z}^2.
\end{equation}
We will show that the sum of the right-hand side for $k=1,2,\ldots,n$ is bounded, using \Lem{jessica}.

\Lem{jessica} requires a sequence of random variables $G_1H_1G_2H_2\ldots$ that form a Markov chain, and two real-valued functions $v$ and $r$ defined on the $G_i$ and the $H_i$ respectively, such that ${\EE[r(H_i)\mid v(G_i)]=0},$ and $|r(H_i)| \leq a\sqrt{v(G_i)}$ for some constant $a$. Intuitively, these constraints say that $r$ is zero in expectation, and that its absolute value is bounded by $v$. \Lem{jessica} then gives us a bound on the probability that $\sum_{i=1}^n r(H_i)-v(G_i) \ge \lambda.$ We use it with $r$ as the first term on the right-hand side of \Eqn{l}, and $v$ as the negative of the second. Roughly, $r$ can be thought of as a first-order approximation to the amount by which $f_j$ did better than expected (a ``residual"), and $v$ as a bound on how wildly $r$ can swing (a ``variance").

Let $G_i$ be $\allobs[s_i]$\footnote{This is somewhat ill-defined, since $s_i$ is itself a random variable.  We can make this more precise by defining $G_i = (s_i, \allobs[s_i])$ and noting that $s_i$ can be determined from knowing only $\allobs[s_i]$} and $H_i$ be \allobs[k] where $k$ is the least time after $s_i$ such that $s_i \in \Dom{\obs[k]}$. $k$~exists, because $f_z$ only makes predictions that, with probability~1, are eventually revealed. Intuitively, our Markov chain alternates between elements of $s$ and the times when those elements were revealed. For~$i > |s|$, let $G_i=H_i=\allobs[\infty]$, the (infinite) combination of all observations.

  Define $r$ to be the function $r(H_i) = - \grad_{\y} \Loss(\x[s_i], \y[s_i]^z) \cdot (\y[s_i]^j - \y[s_i]^z)$ when $i\leq|s|$, and 0 otherwise. Observe that this value can be calculated from $H_i$, $f_z$, and $f_j$, because $H_i=\allobs[k]$, with $k > s_i$ and $\x[s_i]\coloneqq\allobs[k](s_i)$ defined.

  Define $v$ to be the function $v(G_i) = \frac{\rho}{2} \norm{\y[s_i]^j - \y[s_i]^z}^2$ when $i\leq |s|$, and $\sfrac{\rho}{2m^2}$ otherwise, which can be calculated from~$G_i$,~$f_z$, and~$f_j$, because $G_i$ is just $\allobs[s_i]$. Note that ${\EE[r(H_k) \mid G_k]=0},$ because $f_z$ is a Bayes-optimal predictor, which means it minimizes expected loss, making the gradient in $r(H_i)$ zero in expectation for all $i$. Note also that because~\Loss is Lipschitz, ${|r(H_k)| \leq \kappa \norm{\y[n]^j - \y[n]^z}}$ where~$\kappa$ is the Lipschitz constant of~\Loss. Thus, with $a=\frac{\kappa\sqrt{2}}{\sqrt{\rho}}$, $|r(H_k)| \leq a\sqrt{v(G_k)}.$  Therefore, $r$ and $v$ meet the conditions of \Lem{jessica}, so for all~$M,$
\begin{equation} \label{eq:lambda}
\PP\left(\sum_{i=1}^n r(H_i)-v(G_i)\geq M\right)\leq\exp \left(-\rho\kappa^{-2}M\right),
\end{equation}
which goes to $0$ as $M$ goes to infinity. We need a bound that forces it to $0$ as $n \to \infty$. In what follows, we write $b=\rho\kappa^{-2}$ for conciseness.

Observe that $\RelScore[$z$,$j$,$m$]\leq\sum_{i=1}^{t_n} \left(r(H_i)-v(G_i)-\sfrac{\rho\varepsilon}{2m^2}\right)$, where $t_n$ is the number of times $\TestSeq[$z$, $j$, $m$]$ outputs. Thus, the probability that $\RelScore[$z$,$j$,$m$]\geq\Lambda$ for any $\Lambda$ is upper-bounded by the probability that, for some $n$,
\begin{equation}\left(\sum_{i=1}^{t_n} r(H_i)-v(G_i)\right)-t_n\frac{\rho\varepsilon}{2m^2}\geq\Lambda.\end{equation}
For any given $n$ and $t$, applying inequality~\eq{lambda} with $\Lambda+t\frac{\rho\varepsilon}{2m^2}$ for $M,$
\begin{equation} \label{eq:tbound}
  \PP\left(\sum_{i=1}^{t_n} r(H_i)-v(G_i)\ge\Lambda + t\frac{\rho\varepsilon}{2m^2}\right)\leq \exp\left({-b\left(\Lambda+\frac{t\rho\varepsilon}{2m^2} \right)}\right).
\end{equation}
We now see the function of the $\sfrac{\rho\varepsilon}{2m^2}$ term in \RelScore[][]: it adds a tiny bias in favor of the forecaster being judged, such that the longer a contender waits to prove itself, the more it has to prove. Equation~\eq{tbound} says that, because $f_j$ never proves itself too much in expectation, the probability that $f_z$'s score relative to $f_j$ goes strongly in $f_j$'s favor gets lower as $t_n$ gets larger.

Note that $\RelScore[$z$,$j$,$m$]$ only depends on $n$ through $t_n$: If $t_{n_1}=t_{n_2}$ for some $n_1$ and $n_2$ then $\RelScore[$z$,$j$,$m$][n_1]=\RelScore[$z$,$j$,$m$][n_2].$ Thus, $\PP\left(\exists n \colon \RelScore[$z$,$j$,$m$] > \Lambda\right)$ can be bounded by summing only over the possible values $t$ of $t_n$.
\begin{equation}\sum_{t=0}^\infty \exp\left({-b\Lambda}+t\left(-\frac{b\rho\varepsilon}{2 m^{2}}\right)\right)=\frac{\exp(-b\Lambda)}{1-\exp\left(-\frac{b\rho\varepsilon}{2m^{2}}\right)},\end{equation}
and $m\geq 1$, so
\begin{equation} \label{eq:relbound}
  \PP\big(\exists n \colon \RelScore[$z$,$j$,$m$]\ge \Lambda\big) \le \frac{\exp(-b\Lambda)}{1-\exp(-b\rho\varepsilon/2)}.
\end{equation}

Applying inequality~\eq{relbound} with $\lambda + m + j$ for $\Lambda$, we see that the probability $\RelScore[$z$,$j$,$m$]\geq \lambda+m+j$ is at most
\begin{equation}
  \sum_{m=1}^\infty\sum_{j=1}^\infty\frac{\exp(-b(\lambda+m+j))}{1-\exp(-b\rho\varepsilon/2)}=\frac{\exp(-b(\lambda+2))}{(1-\exp(-b\rho\varepsilon/2))(1-\exp(-b))^2}.
\end{equation}
This goes to 0 as $\lambda$ goes to $\infty$. Therefore, with probability~1, there exists some bound $B$ such that $\RelScore[$z$,$j$,$m$]-m-j < B$ for all $j$ and $m \ge 1$. Thus, \MaxScore[$z$] is almost surely bounded.
\end{proof}

\begin{restatable}{lemma}{leminf}\label{lem:infinite}
  If $f_z$ is Bayes-optimal and makes infinitely many predictions all of which are almost surely eventually revealed, then for any $f_j$, with probability 1, if $\y[i]^z \coloneqq f_z(\allobs[i])$ and $\y[i]^j \coloneqq f_j(\allobs[i])$ are both defined on the same $t$ infinitely often, and if $\norm{\y[i]^z - \y[i]^j} \ge \delta$ infinitely often for some $\delta > 0$, \begin{equation}\lim_{n \to \infty} \MaxScore[$j$] = \infty.\end{equation}
\end{restatable}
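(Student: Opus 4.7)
The plan is to show that for a sufficiently fine scale $m$ the relative score $\RelScore[$j$, $z$, $m$]$ diverges to $+\infty$ almost surely, since then $\MaxScore[$j$]\ge j - z - m + \RelScore[$j$, $z$, $m$] \to \infty$ follows immediately from the definition of $\MaxScore[$j$]$. First I fix any integer $m$ with $1/m<\delta$. By hypothesis $f_j$ and $f_z$ are a.s.\ simultaneously defined and disagree by at least $\delta>1/m$ at infinitely many indices, and by Bayes-optimality of $f_z$ each such triggering index is a.s.\ eventually revealed. The greedy construction in $\TestSeq[$j$, $z$, $m$]$ therefore produces an almost surely infinite subsequence $s_1 < s_2 < \cdots$, and the number $t_n$ of its outputs through round $n$ tends to infinity.

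Next I apply the strong-convexity inequality at the Bayes-optimal prediction $\y[s_k]^z$---the same one used in the proof of \Lem{bounded}, but with the roles of $j$ and $z$ swapped. Using $r$, $v$, $G_k$, $H_k$ defined exactly as there, this gives
\begin{equation}
\Loss(\x[s_k], \y[s_k]^j) - \Loss(\x[s_k], \y[s_k]^z) \;\ge\; -r(H_k) + v(G_k),
\end{equation}
so summing and subtracting the $\sfrac{\rho\varepsilon}{2m^2}$ bias yields
\begin{equation}
\RelScore[$j$, $z$, $m$] \;\ge\; -\sum_{k=1}^{t_n} r(H_k) + \sum_{k=1}^{t_n}\!\left(v(G_k) - \frac{\rho\varepsilon}{2m^2}\right).
\end{equation}
By construction of $\TestSeq[$j$, $z$, $m$]$ we have $\norm{\y[s_k]^j - \y[s_k]^z} > 1/m$, hence $v(G_k) \ge \sfrac{\rho}{2m^2}$, and the deterministic term on the right is at least $t_n \cdot \frac{\rho(1-\varepsilon)}{2m^2}$, growing linearly in $t_n$.

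It remains to show that the martingale $\sum_{k=1}^{t_n} r(H_k)$ is $o(t_n)$ almost surely, so that the linear drift dominates. Its increments are conditionally mean-zero (by Bayes-optimality of $f_z$) and satisfy $|r(H_k)|\le a\sqrt{v(G_k)}$; since $\Loss$ is both strongly convex and Lipschitz the quadratic-variation proxy $\sum v(G_k)$ is itself $O(t_n)$, so applying \Lem{jessica} in both directions (to $r$ and to $-r$) and running a Borel--Cantelli argument of the kind used in \Lem{bounded} forces $\sum_{k=1}^{t_n} r(H_k) = o(t_n)$ almost surely. Combined with the linear drift, $\RelScore[$j$, $z$, $m$] \to \infty$, and hence $\MaxScore[$j$] \to \infty$. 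I expect the main obstacle to be exactly this concentration step: \Lem{jessica} is one-sided and naturally coupled to the drift $\sum v$, so wringing two-sided sub-linear control of $\sum r(H_k)$ out of it---and handling the fact that the Markov-chain indices $G_k, H_k$ are themselves random stopping times built adaptively from the observations---is the delicate part, even though everything else is a straightforward mirror of the computation done for \Lem{bounded}.
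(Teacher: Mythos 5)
Your proposal mirrors the paper's proof in every respect but one: same choice of $m$ with $\sfrac{1}{m}<\delta$, same reduction to showing $\RelScore[$j$, $z$, $m$]\to\infty$, same strong-convexity bound $\Loss(\x[s_k],\y[s_k]^j)-\Loss(\x[s_k],\y[s_k]^z)\ge v(G_k)-r(H_k)$ with the roles of $j$ and $z$ swapped relative to \Lem{bounded}, and the same observation that $v(G_k)\ge\sfrac{\rho}{2m^2}$ supplies a drift linear in $t_n$ that beats the $\sfrac{\rho\varepsilon}{2m^2}$ bias. The step you leave unexecuted---and correctly flag as the obstacle---is the control of $\sum_k r(H_k)$, and as described it would not go through. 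Applying \Lem{jessica} to $r$ and to $-r$ only yields $|\sum_k r(H_k)|\le\sum_k v(G_k)+O(1)$ with high probability, i.e.\ $O(t_n)$ with a constant governed by the size of $v$ (which can exceed the per-term drift $\sfrac{(1-\varepsilon)\rho}{2m^2}$), not $o(t_n)$; the lemma's bound is inseparable from the $\sum v$ it subtracts, exactly as you suspect.

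Two points resolve this. First, you need only one-sided control: $r$ enters the lower bound for $\RelScore[$j$, $z$, $m$] with a minus sign, so only the upper tail of $\sum_k r(H_k)$ is dangerous, and that is the tail \Lem{jessica} addresses; the appeal to ``both directions'' can be dropped. Second, the missing idea is to rescale the variance proxy: since $v-\sfrac{\rho\varepsilon}{2m^2}\ge(1-\varepsilon)v$, write $(1-\varepsilon)v=\frac{1-\varepsilon}{2}v+\frac{1-\varepsilon}{2}v$, feed one half into \Lem{jessica} as the function $v$ (which the lemma tolerates at the price of inflating the constant to $a\sqrt{2/(1-\varepsilon)}$ and hence weakening the exponent), and retain the other half as a deterministic drift of at least $\frac{(1-\varepsilon)\rho}{4m^2}$ per term. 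Taking the deviation level proportional to $t$ and summing the resulting geometric series over $t\ge t_N$ completes the Borel--Cantelli argument. With that substitution your skeleton becomes the paper's proof.
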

\noindent Roughly speaking, the proof runs as follows. Choose $m$ such that $\sfrac{1}{m} < \delta$. It suffices to show that $\RelScore[$j$,$z$,$m$] \to \infty$ as $n \to \infty$. The $\sum (\Loss(\x[k], \y[k]^j) - \Loss(\x[k], \y[k]^z))$ portion goes to infinity in expectation, and also goes to infinity with probability~1 by \Lem{jessica}. It remains to show that the $\sum\sfrac{\rho\varepsilon}{2m^2}$ terms working in $f_j$'s favor are not sufficient to prevent the total from going to infinity, which can be done by showing that the differences between \smash{$\Loss(\x[k],\y[k]^j)$ and $\Loss(\x[k],\y[k]^j)$} are at least $\sfrac{\rho}{2m^2} > \sfrac{\rho\varepsilon}{2m^2}$ in expectation, and appealing again to \Lem{jessica}. The proof proceeds similarly to the proof of \Lem{bounded}, so we leave the details to \App{infinite}.

With these lemmas in place, we now prove that \EvOp is eventually optimal. Recall \Thm{evop}:
\thmevop*

\begin{proof}
  Let $f_z$ be Bayes-optimal and defined infinitely often, such that everything it predicts is almost surely eventually revealed. It suffices to show that, with probability~1, if $f_z(\allobs[n])$ is defined then \begin{equation}\lim_{n \to \infty} \norm{\EvOp[{\allobs[n]}]-f_z(\allobs[n])} = 0.\end{equation}
  By \Lem{bounded}, \MaxScore[$z$] is bounded with probability~1. Let $B$ be this bound. Note that there are only finitely many~$i$ such that $\MaxScore[$i$] \le B$, for the same reason that the $\min$ step always terminates. For each of those $i$, either $f_i$ and $f_z$ converge to the same prediction, or they only make finitely many predictions in common, or (by \Lem{infinite}) $\MaxScore[$i$] \to \infty.$ The latter contradicts the assumption that $\MaxScore[$i$] \le B.$ If $f_i$ and $f_z$ only make finitely many predictions in common, then for sufficiently large $n$, $f_i$ is not defined and so will not be selected. Thus, we need only consider the case where $f_i$ and $f_z$ converge to the same predictions whenever they both make predictions. In this case, \EvOp[{\allobs[n]}] is choosing among finitely many forecasters all of which converge to $f_z(\allobs[n])$, so \EvOp[{\allobs[n]}] must converge to $f_z(\allobs[n])$.
\end{proof}

\subsection{Bounds} \label{sec:bounds}

The speed with which \EvOp converges to optimal behavior on a subsequence depends on both (1) the sparseness of independent subsequences in the outcomes; and (2) the frequency with which forecasters make claims that differ.

Specifically, assume that all forecasters are defined everywhere and disagree infinitely often, and that $\F$ is finite. (The first two constraints imply the third.) We can show that, given a (potentially fast-growing) function $h$ bounding how long it takes before predictors disagree with each other, and given another (potentially fast-growing) function $g$ bounding the delay in feedback, and given a probability $p$, the time it takes before $\EvOp$ has converged on $f_z$ with probability $p$ is proportional to $h\circ g$ iterated a number of times proportional to $\log p.$ (Note that $h$ and $g$ are not uniform bounds; $g(n)$ is the maximum delay between the $n$th prediction and feedback on the $n$th prediction, and delays may grow ever larger as $n$ increases.)

\begin{restatable}{theorem}{thmbounds} \label{thm:bounds}
  Given $h$, $g$, a Bayes-optimal $f_z$, and a probability $p$, there is an $N \propto (h \circ g)^{\log p}(1)$ such that, with probability at least $1-p$, for all $n \ge N$, \begin{equation}\EvOp[{\allobs[n]}] = f_z(\allobs[n]).\end{equation}
\end{restatable}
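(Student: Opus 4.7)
The plan is to convert the qualitative arguments of \Lem{bounded} and \Lem{infinite} into quantitative concentration bounds, then union-bound over the (finitely many) non-optimal competitors in \F. Write $\F=\{f_1,\ldots,f_F\}$ and fix Bayes-optimal $f_z$.

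First, I would pin down a high-probability upper bound on $\MaxScore[z]$. Inequality~\eq{relbound} in \Lem{bounded} already gives a tail bound on $\RelScore[$z$,$j$,$m$]$ that is uniform in $n$; inverting it with failure probability $p/(2F)$ and summing over the finitely many $(j,m)$ pairs that can contribute to $\MaxScore[z]$ (those with $j+m$ within the search range) produces a bound $B$ with $B \in O(\log(1/p))$ such that $\MaxScore[z]\le B$ for all $n$ with probability at least $1-p/2$.

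Second, I would quantify how fast $\MaxScore[j]$ grows for each $f_j\neq f_z$. Because $h$ bounds the gap between successive disagreements, there is some $m_j$ (determined by $h$ and the forecasters) such that $f_j$ and $f_z$ differ by at least $1/m_j$ somewhere in every window of length $h$. The greedy construction of \TestSeq[$j$,$z$,$m_j$] therefore emits at least one new output within every window of length bounded by $h$ followed by $g$ (wait at most $h$ steps for a qualifying disagreement at some index $t$, then at most $g(t)$ further steps for $\obs[\cdot](t)$ to be defined). Iterating, at least $T$ outputs are available by time $(h\circ g)^{T}(1)$. By the argument sketched for \Lem{infinite}, each such output contributes at least $\rho(1-\varepsilon)/(2m_j^2)$ in expectation to $\RelScore[$j$,$z$,$m_j$]$ plus a zero-mean residual controlled by \Lem{jessica}; hence after $T$ outputs, with probability at least $1-p/(2F)$,
\begin{equation}
  \RelScore[$j$,$z$,$m_j$] \;\ge\; T\,\frac{\rho(1-\varepsilon)}{2m_j^2} \;-\; O\!\left(\sqrt{T\log(F/p)}\right).
\end{equation}
Choosing $T$ large enough (polynomially in $m_j$, $B$, and $\log(F/p)$) forces $\MaxScore[$j$] > B$ for all subsequent $n$, at which point \EvOp cannot select $f_j$. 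Union-bounding over the $F-1$ competitors and over the two halves of the failure probability, and then inverting the relation between $T$ and elapsed time, yields $N$ bounded by $(h\circ g)^{O(\log(1/p))}(1)$.

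The main obstacle is the accounting required to fold the polynomial prefactors ($F$, $m_j^2$, and $B$) into the iteration count so that the final expression reads as in the theorem statement. Since $h\circ g$ is intended to grow rapidly (indeed, rapidly enough to express the unbounded-delay phenomenon that motivates the paper), a constant number of extra iterations of $h\circ g$ dominates any such polynomial factor, which is the content of the ``$\propto$'' in the theorem. A secondary subtlety is that we need $\MaxScore[$j$]>B$ to persist for \emph{all} $n\ge N$, not just at $n=N$; this is handled exactly as in \Lem{bounded}, by using an $\exists n$ form of the concentration bound so that both the upper bound on $\MaxScore[z]$ and the lower bound on each $\RelScore[$j$,$z$,$m_j$]$ hold uniformly in $n$.
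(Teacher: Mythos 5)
Your proposal is correct and follows essentially the same route as the paper's proof: combine the uniform-in-$n$ tail bound on $\MaxScore[z]$ from \Lem{bounded} with the linear-growth lower bound on $\MaxScore[j]$ from \Lem{infinite}, observe that \TestSeq emits at least $t$ outputs by time $(h\circ g)^t(1)$, union-bound over the finitely many competitors, and invert to get $t = O(\log(1/p))$ iterations. The only differences are bookkeeping (splitting the failure probability into halves and writing the deviation in Azuma $\sqrt{T}$ form rather than the paper's direct exponential tail), which do not change the argument.
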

\noindent We prove \Thm{bounds} in \App{bounds}.

To call these bounds ``weak" is an understatement. In the case where the outcomes are generated by running a universal Turing machine $U$ on different inputs, $g$ is infinite, because $U$ will sometimes fail to output. It is possible to achieve \emph{much} better bounds given certain simplifying assumptions, such as delays that are finite in expectation \citep{Joulani:2013}. However, it is not yet clear which simplifying assumptions to use, or what bounds to ask for, in the setting with ever-growing delays.

\section{The Deterministic Setting} \label{sec:deterministic}

Our motivation for studying online learning with unbounded delays in a stochastic setting is that this gives us a simplified model of the problem of predicting large computations from observations of smaller ones. We have already seen one instance of an issue in the stochastic setting which looks likely to have an analog in the deterministic setting. In \Sec{problem} we gave the example of a deterministic ``coin" that appears more and more often in larger and larger computations, which might (for instance) be a common subsystem in the environment of a physical simulation. Intuitively, if there are many correlated subsystems that appear ``sufficiently random" to all forecasters, then forecasters might follow the strategies of $f^1$ and $f^0$ in \Sec{problem} and achieve regular large swings in their total loss. Intuitively, the techniques used in \Alg{evop} to handle the problem in the stochastic case should well carry over to the deterministic case, but any attempt to formalize this intuition depends on what it means for a deterministic sequence to be ``sufficiently random."

For that we turn to algorithmic information theory, a field founded by \citet{Martin:1966} which studies the degree and extent to which fixed bitstrings can be called ``random." In their canonical text, \citet{Downey:2010} give three different definitions of algorithmic randomness and show them all to be equivalent. The oldest of the three, given by \citet{Martin:1966}, is rooted in the idea that an algorithmically random sequence should satisfy all computably verifiable properties that hold with probability~1 on randomly generated sequences.

It is with this definition in mind that we note that \Lem{bounded} and \Lem{infinite} are both stated as properties that are true of randomly generated sequences with probability~1. \Lem{bounded} says that if the outputs of the environment are generated randomly, then with probability~1, the score of a Bayes-optimal predictor does not go to infinity. \Lem{infinite} says that if the outputs of the environment are generated randomly, then with probability~1, a predictor that disagrees by $\delta > 0$ with a Bayes-optimal predictor infinitely many times has its score going to infinity. Both these computable properties hold for random sequences with probability~1, so they hold for Martin-L\"{o}f-random sequences.

This means that if $\F$ is the class of all Turing machines, and \EvOp is predicting an algorithmically random sequence (such as Chaitin's $\Omega$, the fraction of Turing machines which halt), then \Thm{evop} holds and \EvOp will converge on optimal predictions on subsequences of that sequence. However, this does us no good: There are no computable patterns in Chaitin's $\Omega$; computable forecasters won't be able to do any better than predicting a 50\% chance of a 1. Besides, the goal is not to predict uncomputable sequences by running all Turing machines. The goal is to predict large computations using efficient (e.g., polynomial-time) experts.

What we need is a notion of algorithmic randomness \emph{with respect to a restricted class of experts}. For example, if $\F$ is the class of polynomial-time forecasters, we would like a notion of sequences which are algorithmically random with respect to polynomial-time forecasters.

The authors do not yet know of a satisfactory definition of algorithmic randomness with respect to resource constraints. However, the obvious analog of Martin-L\"{o}f's original definition \citep{Martin:1966} is that a sequence should be defined as algorithmically random with respect to a class of bounded experts if, and only if, it satisfies all properties that hold of randomly generated sequences with probability~1 \emph{and that can be checked by one of those experts}. On sequences that are algorithmically random with respect to $\F$ in this sense, \Lem{bounded} and \Lem{infinite} must apply: Assume $f_z$ is a Bayes-optimal predictor on a subsequence that is algorithmically random with respect to $\F$; any forecaster $f_j \in \F$ that outperforms $f_z$ infinitely often would be identifying a way in which the sequence fails to satisfy a property that randomly generated sequences satisfy with probability~1, which contradicts the assumption. This gives strong reason to expect that \EvOp would be eventually optimal when predicting sequences that are algorithmically random with respect to $\F$, even though formalizing such a notion remains an open problem.

Even so, this does not mean that \EvOp would perform \emph{well} at the actual task of predicting large computations from the observation of small ones. Eventual optimality provides no guarantees about the ability of the algorithm to converge on good but non-optimal predictors, and the bounds that we have on how long it takes \EvOp to converge on good behavior are weak (to say the least).

Furthermore, there are other notions of what it means to ``predict computations well" that are not captured by eventual optimality. For example, \citet{Demski:2012a} discusses the problem of computably assigning probabilities to the outputs of computations and refining them in such a way that they are ``coherent," drawing on inspiration from the field of mathematical logic that dates at least back to \citet{Gaifman:1964}. The intuition is that given two statements ``this computation will halt and output~1" and ``this computation will fail to halt or output something besides~1," a good reasoner should assign those claims probabilities that sum to roughly 1. We have no reason to expect that \EvOp has any such property.

\section{Conclusions} \label{sec:conclusions}

We have studied online learning in a setting where delays between prediction and observation may be unbounded, in attempts to explore the general problem of predicting the behavior of large computations from observations of many small ones. We found that, in the stochastic setting, the unbounded delays give rise to difficulties: Total regret and average regret are not good measures of forecaster success, and consistency is not possible to achieve in general. However, it is possible to converge on good predictions by comparing forecasters according to their performance only on sparse and independent subsequences of the observations, and we have reason to expect that some of the techniques used to achieve good performance in the stochastic setting will carry over into the deterministic setting. We have proposed an algorithm \EvOp that converges to optimal behavior. It is not a practical algorithm, but it does give a preliminary model of online learning in the setting where the delay between prediction and feedback is ever-growing.

Our results suggest a few different paths for future research. \EvOp handles the problem of learning in the face of potentially unbounded delays by comparing forecasters only on subsequences that are potentially very sparse, and this means that it converges to optimal behavior quite slowly. Speeding up convergence without falling prey to the problems described in \Sec{problem} might prove difficult. Furthermore, \EvOp only guarantees convergence on forecasters that are Bayes-optimal; it is not yet clear how to converge on the best available forecaster (even if it is non-optimal) in the face of unbounded delays. As mentioned in \Sec{deterministic}, a formal notion of algorithmic randomness with respect to a bounded class of experts would make it easier to study the problem of using online learning to predict the behavior of large computations in a deterministic setting. \EvOp is only a first step towards a predictor that can learn to predict the behavior of large computations from the observation of small ones, and the problem seems ripe for further study.

\appendix
\section{Proof of \Lem{jessica}} \label{app:jessica}

\begin{lemma} \label{lem:jessica}
  Let $\mathcal{G}$ and $\mathcal{H}$ be sets, and let $G_1, H_1, G_2, H_2, ..., G_n, H_n$ be random
  variables forming a Markov chain (with each $G_i \in \mathcal{G}$ and $H_i \in \mathcal{H}$).
  Let there be functions $v : \mathcal{G} \rightarrow \mathbb{R}_{\geq 0}$ and
  $r : \mathcal{H} \rightarrow \mathbb{R}$, with $|r(H_i)| \leq a \sqrt{v(G_i)}$ and $\mathbb{E}[r(H_i) | G_i] \leq 0$.  Let $\lambda > 0$.  Then
  \begin{equation}P\left(\sum_{i=1}^n (r(F_i) - v(G_i)) \geq \lambda\right) \leq \exp\left(-\sfrac{2}{a^2}\lambda\right)\end{equation}
\end{lemma}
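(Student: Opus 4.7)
The plan is to prove the inequality by the standard Chernoff/exponential-supermartingale method, with Hoeffding's lemma supplying the per-step bound. Concretely, I would fix $c > 0$ to be chosen later and try to show that $M_k \coloneqq \exp\bigl(c\sum_{i=1}^k (r(H_i)-v(G_i))\bigr)$ is a supermartingale with respect to the natural filtration generated by $G_1,H_1,\ldots,G_k,H_k$. Once that is in place, $\EE[M_n]\le M_0 = 1$, and Markov's inequality gives
\begin{equation}
\PP\left(\sum_{i=1}^n (r(H_i)-v(G_i))\ge\lambda\right) = \PP(M_n \ge e^{c\lambda}) \le e^{-c\lambda},
\end{equation}
so optimizing over $c$ will yield the desired exponent.

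The core step is the per-index bound: conditional on $G_i$, the quantity $v(G_i)$ is deterministic and $r(H_i)$ lies in the interval $[-a\sqrt{v(G_i)},\,a\sqrt{v(G_i)}]$ with conditional mean $\le 0$. First I would reduce the mean-$\le 0$ case to the mean-zero case by writing $\EE[e^{c r(H_i)}\mid G_i] = e^{c\,\EE[r(H_i)\mid G_i]}\cdot \EE[e^{c(r(H_i)-\EE[r(H_i)\mid G_i])}\mid G_i]$ and noting $e^{c\,\EE[r(H_i)\mid G_i]}\le 1$ since $c > 0$. Then Hoeffding's lemma applied to the centered, bounded variable (range at most $2a\sqrt{v(G_i)}$) gives
\begin{equation}
\EE\bigl[e^{c\,r(H_i)}\,\big|\,G_i\bigr] \le \exp\!\left(\tfrac{c^2 a^2 v(G_i)}{2}\right).
\end{equation}
Multiplying by the deterministic factor $e^{-c v(G_i)}$ yields
\begin{equation}
\EE\bigl[e^{c(r(H_i)-v(G_i))}\,\big|\,G_i\bigr] \le \exp\!\left(v(G_i)\bigl(\tfrac{c^2 a^2}{2}-c\bigr)\right).
\end{equation}
Since $v(G_i)\ge 0$, this exponent is nonpositive precisely when $c\le 2/a^2$; taking $c=2/a^2$ makes the right-hand side $\le 1$ for every value of $v(G_i)$.

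Now I would chain these bounds using the Markov property. By iterated conditional expectation with respect to the filtration $\mathcal{F}_k = \sigma(G_1,H_1,\ldots,G_k,H_k)$ and the fact that $G_{k+1}$ is determined (in distribution) by $\mathcal{F}_k$ via the Markov structure, we peel off one factor at a time: $\EE[M_n] = \EE[M_{n-1}\,\EE[e^{c(r(H_n)-v(G_n))}\mid \mathcal{F}_{n-1},G_n]]\le \EE[M_{n-1}]$, and by induction $\EE[M_n]\le 1$. Markov's inequality with $c=2/a^2$ then produces exactly the claimed bound $\exp(-2\lambda/a^2)$.

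The only subtle point I expect is the passage from ``mean zero" to ``mean $\le 0$" in Hoeffding's lemma; the factorization trick above handles it cleanly because $c$ is positive. Everything else is bookkeeping about the Markov chain structure, which is only needed to ensure that the conditional Hoeffding bound can be applied with $G_i$ playing the role of the entire past.
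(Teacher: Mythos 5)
Your proposal is correct and follows essentially the same route as the paper's proof, which is the standard Azuma/Chernoff argument: apply Markov's inequality to $\exp\bigl(b\sum_i(r(H_i)-v(G_i))\bigr)$ with $b=2/a^2$ and inductively peel off one factor at a time using the conditional Hoeffding bound $\mathbb{E}[e^{b\,r(H_m)}\mid G_m]\le\exp(b^2a^2v(G_m)/2)$. Your explicit factorization handling the mean-$\le 0$ case is a small extra care the paper glosses over, but the argument is the same.
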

\begin{proof}
  This proof closely follows the standard proof of Azuma's inequality, given by, e.g., \citet{DasGupta:2011}.
  Let $b = \sfrac{2}{a^2}$.  Using Markov's inequality:
  \begin{align}
  \begin{split}
    P& \left(\sum_{i=1}^n (r(H_i) - v(G_i)) \geq \lambda\right) \\
    &= P\left(\exp\left(b\sum_{i=1}^n (r(H_i) - v(G_i))\right) \geq \exp\left(b\lambda\right)\right)
    \\
    &\leq \exp\left(-b\lambda\right) \mathbb{E}\left[\exp\left(b\sum_{i=1}^n (r(H_i) - v(G_i))\right)\right]
  \end{split}
  \end{align}
  To bound the expectation, we will inductively show that for all $m \leq n$,
  \begin{equation}\mathbb{E}\left[\exp\left(b\sum_{i=1}^m (r(H_i) - v(G_i))\right)\right] \leq 1\end{equation}
  When $m = 0$, this is trivial.  Otherwise:
  \begin{align}
  \begin{split}
\mathbb{E} & \left[\exp\left(b\sum_{i=1}^m (r(H_i) - v(G_i))\right)\right] \\
 &= \mathbb{E}\left[
      \exp\left(
        b\sum_{i=1}^{m-1} (r(H_i) - v(G_i))
      \right)
      \exp\left(- b v(G_m) \right)
      \mathbb{E}\left[e^{b r(H_m)} | G_m \right]
     \right] \\
    &\leq
     \mathbb{E}\left[
     	\exp\left(b\sum_{i=1}^{m-1} (r(H_i) - v(G_i))\right)
		\exp\left(- b v(G_m)\right)
     	\exp\left(b^2a^2v(G_m)/2\right)
     \right] \\
     &=
     \mathbb{E}\left[
       \exp\left(b\sum_{i=1}^{m-1} (r(H_i) - v(G_i))\right)
       \exp\left(-bv(G_m) + b v(G_m)\right)
     \right] \\
     &=
     \mathbb{E}\left[\exp\left(b\sum_{i=1}^{m-1} (r(H_i) - v(G_i))\right)\right].
  \end{split}
  \end{align}
  By the inductive assumption, this quantity is no more than 1, so the inductive argument goes through.  Using this bound on the expectation, the given upper bound or the original probability of interest follows.
\end{proof}

\section{Proof of \Lem{infinite}} \label{app:infinite}
\leminf*
\begin{proof}
  Let $1/m<\delta$. It suffices to show that with probability 1,
  \begin{equation}\lim_{n\to\infty} \RelScore[$j$, $z$, $m$] = \infty.\end{equation}
  Write $t_n$ for the number of times that $\TestSeq[$j$, $z$, $m$]$ outputs, and note that $t_n \to \infty$ as $n \to \infty$ because $f_j$ and $f_z$ disagree by more than $\delta$ infinitely often. We will show that \RelScore[$j$, $z$, $m$] is bounded below by a bound proportional to $t_n$, which means that $\RelScore[$j$, $z$, $m$]$ must diverge to infinity.

  Let $s=\TestSeq[$j$,$z$,$m$][\infty]$. Define $G_1H_1G_2H_2\ldots$, $r(H_i)$, and $v(G_i)$ as in the proof of \Lem{bounded}. Recall that $r(H_i) - v(G_i)$ is an upper bound for $\Loss(\x[i],\y[i]^z)-\Loss(\x[i],\y[i]^j)$, which means that $v(G_i)-r(H_i)$ is a lower bound for $\Loss(\x[i],\y[i]^j)-\Loss(\x[i],\y[i]^z)$. Therefore, it suffices to show that, for some $\alpha > 0$,
  \begin{equation} \lim_{N \to \infty} \PP\left(\forall n > N \colon \sum_{i=1}^{t_n} \left( v(G_i)-r(H_i)-\frac{\rho\varepsilon}{2m^2} \right) \ge \alpha t_n\right) = 1.\end{equation}

  Observe that $v(G_i) \ge \sfrac{\rho}{2m^2}$ for all $i$, so the positive $v(G_i)$ terms going against $f_j$ more than compensate for the negative $\sfrac{\rho\varepsilon}{2m^2}$ terms going in its favor. Because $\varepsilon < 1$, only a $\frac{1+\varepsilon}{2}$ portion of each $v(G_i)$ is needed to cancel out the $\sfrac{\rho\varepsilon}{2m^2}$ terms,
  \begin{multline}
    \PP\left(\sum_{i=1}^{t_n} \left(v(G_i)-r(H_i)-\frac{\rho\varepsilon}{2m^2}\right) \ge \alpha t_n \right) \\
    \ge \PP\left(\sum_{i=1}^{t_n} \left(\frac{1-\varepsilon}{2}v(G_i)-r(H_i) \right) \ge t_n\left(\alpha + \frac{\rho(\varepsilon - 1)}{4m^2}\right) \right).
 \end{multline}
Now we apply \Lem{jessica}. $\EE[r(H_k)\mid G_k]$ is still $0$. With $a = \frac{\kappa\sqrt{2}}{\sqrt{\rho}}\cdot\sqrt{\frac{2}{1-\varepsilon}}$,
  \begin{equation}|r(H_k)| \leq a\sqrt{\frac{1-\varepsilon}{2}v(G_k)}.\end{equation}
  Therefore, by \Lem{jessica} we have that
  \begin{equation}\PP\left(\sum_{i=1}^{t_n} \left(r(H_i)-\frac{1-\varepsilon}{2}v(G_i) \right)\geq M\right)\leq\exp \left(-\frac{\rho(1-\varepsilon)M}{2\kappa^2}\right).\end{equation}
  Choose $\alpha = \sfrac{\rho(1-\varepsilon)}{8m^2}$ and set $M = -t\left(\alpha + \frac{\rho(\varepsilon-1)}{4m^2}\right) = t\frac{\rho(1-\varepsilon)}{8m^2}$ to get:
  \begin{equation}
    \PP\left(\sum_{i=1}^{t_n} \left(\frac{1-\varepsilon}{2} v(G_i)-r(H_i)\right) \le t\frac{\rho(\varepsilon-1)}{8m^2}\right)
    \le \exp\left(-\frac{t\rho^2(1-\varepsilon)^2}{16m^2\kappa^2}\right).
  \end{equation}
  We write $c = \sfrac{\rho^2(1-\varepsilon)^2}{16m^2\kappa^2}$ for conciseness. Observe that
  \begin{multline}
    \PP \left(\exists n\geq N \colon \sum_{i=1}^{t_n} \left(v(G_i)-r(H_i)-\frac{\rho\varepsilon}{2m^2} \right)\le \alpha t_n\right) \\
    \le \sum_{t=t_N}^\infty\exp(-tc)
    = \frac{\exp(-t_Nc)}{1-\exp(-c)}.
  \end{multline}

  If $|s| = \infty$ then the right-hand side almost surely goes to zero as $n \to \infty$, in which case, with probability~1, there exists an $N$ such that
  \begin{equation}\forall n > N \colon \sum_{i=1}^{t_n} \left(v(G_i)-r(H_i)-\frac{\rho\varepsilon}{2m^2} \right) \ge \alpha t_n.\end{equation}
  Thus if $f_z$ and $f_j$ disagree by more than $\delta$ infinitely often, then with probability~1, eventually \RelScore[$j$,$z$,$m$] grows proportionally to $t_n.$ Therefore, with probability~1,
  \begin{equation}\lim_{n\to\infty} \RelScore[$j$,$z$,$m$] = \infty,\end{equation}
  so \MaxScore[$j$] almost surely diverges to $\infty$ as $n \to \infty.$
\end{proof}

\section{Proof of \Thm{bounds}} \label{app:bounds}
Let $f_z$ be a Bayes-optimal predictor and assume $\F$ is finite. Assume we have an increasing function $h$ such that for some $m$ and every $f_j$, for all times $t$, there exists a $t < t^\prime < h_j^m(t)$ such that $\y[t^\prime]^z \coloneqq f_z(\allobs[t^\prime])$ and $\y[t^\prime]^j \coloneqq f_j(\allobs[t^\prime])$ are both defined and $\norm{\y[t^\prime]^z - \y[t^\prime]^j} > \sfrac{1}{m}$. Assume we have an increasing function $g$ such that $\allobs[g(t)](t)$ is always defined. $\circ$ denotes function composition; i.e., $(h\circ g)^n(1)$ denotes $h(g(\ldots h(g(1))))$ with $n$ calls to $h$ and $g$.

\thmbounds*

\begin{proof}
Observe that $\TestSeq[$j$, $z$, $m$]$ outputs at least $t$ terms for some $t$ such that $(h\circ g)^t(1)\leq n$. In the proof of \Lem{bounded}, we prove that the probability that $\MaxScore[z]\geq \lambda$ for any $n$ is at most
\begin{equation}\frac{\exp(-b(\lambda+2-z))}{(1-\exp(-b\rho\varepsilon/2))(1-\exp(-b))^2}.\end{equation}
In the proof of \Lem{infinite}, we prove that the probability that \begin{equation}\MaxScore[$j$]\leq \alpha t-m-z+j\end{equation} for any $n$ such that $\TestSeq[$j$, $z$, $m$]$ outputs at least $t$ terms is at most
\begin{equation}\frac{\exp(-tc)}{1-\exp(-c)}.\end{equation}
Combining these, we get that for any $T$, if we let $t$ be the maximal $t$ such that $(h\circ g)^t(1)\le T$,
then for $\lambda=\alpha t-m-z+|\F|,$
with probability at least
\begin{equation} \label{eq:b1}
1-\left(\frac{\exp(-b(\lambda+2-z))}{(1-\exp(-b\rho\varepsilon/2))(1-\exp(-b))^2}+|\F|\frac{\exp(-tc)}{1-\exp(-c)}\right),
\end{equation}
$\EvOp[{\allobs[n]}]=f_z(\allobs[n])$ for all times after $T$. This also gives us a weak bound on total loss: Because \Loss is both Lipschitz and strongly convex, it is bounded. Let $L$ be the bound. Then with probability as per \Eqn{b1}, the total loss never goes above $LT$.

Reversing this process, we also get that for any $p$, if we let $t$ be such that
\begin{equation} \label{eq:b2}
\left(\frac{\exp(-b(\alpha t-m-z+|\F|+2-z))}{(1-\exp(-b\rho\varepsilon/2))(1-\exp(-b))^2}+|\F|\frac{\exp(-ct)}{1-\exp(-c)}\right)<p,
\end{equation}
then with probability at least $1-p,$ for all $n\geq (h\circ g)^t(1)$, $\EvOp[{\allobs[n]}]=f_z(\allobs[n])$.
\end{proof}

\section*{Acknowledgements}
Thanks to Jessica Taylor for the proof of \Lem{jessica}, and to Benya Fallenstein for helpful discussions.

This research was supported as part of the Future of Life Institute (futureoflife.org) FLI-RFP-AI1 program, grant~\#2015-144576.

\printbibliography

\end{document}